\documentclass[conference]{IEEEtran}
\IEEEoverridecommandlockouts
\usepackage{cite}
\usepackage{amsmath,amssymb,amsfonts}
\usepackage{algorithmic}
\usepackage{graphicx}
\usepackage{textcomp}
\usepackage{xcolor}
\usepackage{hyperref}
\usepackage[]{todonotes}
\usepackage{soul}
\usepackage{cite}
\usepackage{balance}
\usepackage{textcomp}
\ifCLASSINFOpdf
\else
\usepackage[dvips]{graphicx}
\fi



\usepackage{algorithm}      
\usepackage{algorithmic}

\usepackage{varwidth}

\usepackage{amsthm}

\newtheorem{prp}{Proposition}

\theoremstyle{definition}
\newtheorem{rem}{Remark}

\newcommand{\R}{\mathbf{R}}
\newcommand{\C}{\mathbf{C}}
\newcommand{\N}{\mathbf{N}}
\newcommand{\E}{\mathbb{E}}
\newcommand{\ones}{\mathbf{1}}
\newcommand{\norm}[1]{\lVert #1 \rVert}
\newcommand{\kronecker}{\otimes}

\newcommand{\Prob}{\Pr}

%


\begin{document}


\title{Finite State Markov Modeling of C-V2X Erasure Links For Performance and Stability Analysis of Platooning Applications}
\author{Mahdi Razzaghpour$^{1}$, Adwait Datar$^{2}$, Daniel Schneider$^{3}$, Mahdi Zaman$^{1}$,\\ Herbert Werner$^{2}$, Hannes Frey$^{3}$, Javad Mohammadpour Velni$^{4}$, Yaser P. Fallah$^{1}$

\thanks{$^{1}$ Connected \& Autonomous Vehicle Research Lab (CAVREL), University of Central Florida, Orlando, FL, USA. \tt\small {razzaghpour.mahdi@knights.ucf.edu}}%
\thanks{$^{2}$ Department of Electrical and Computer Engineering, Hamburg University of Technology, Germany.}
\thanks{$^{3}$ Department of Electrical and Computer Engineering,  University of Koblenz, Germany.}
\thanks{$^{4}$ School of Electrical and Computer Engineering, University of Georgia, Athens, GA, USA.}
\thanks{The code and supplementary material are available at: \url{www.github.com/MahdiRazzaghpour/MJLSPlatooning}}
}

\maketitle

\begin{abstract}
Cooperative driving systems, such as platooning, rely on communication and information exchange to create situational awareness for each agent. Design and performance of control components are therefore tightly coupled with communication component performance. The information flow between vehicles can significantly affect the dynamics of a platoon. Therefore, both the performance and the stability of a platoon depend not only on the vehicle's controller but also on the information flow Topology (IFT). The IFT can cause limitations for certain platoon properties, i.e., stability and scalability. Cellular Vehicle-To-Everything (C-V2X) has emerged as one of the main communication technologies to support connected and automated vehicle applications. As a result of packet loss, wireless channels create random link interruption and changes in network topologies. In this paper, we model the communication links between vehicles with a first-order Markov model to capture the prevalent time correlations for each link. These models enable performance evaluation through better approximation of communication links during system design stages. Our approach is to use data from experiments to model the Inter-Packet Gap (IPG) using Markov chains and derive transition probability matrices for consecutive IPG states. Training data is collected from high fidelity simulations using models derived based on empirical data for a variety of different vehicle densities and communication rates. Utilizing the IPG models, we analyze the mean-square stability of a platoon of vehicles with the standard consensus protocol tuned for ideal communication and compare the degradation in performance for different scenarios. We additionally present some initial theoretical results that shed some light on the connection between the independent identical distributed (i.i.d.) modeling approach which neglects the time correlations in links and the proposed Markovian approach.
\end{abstract}

\begin{IEEEkeywords}
Cooperative Driving, Platooning, Non-ideal Communication, Cellular Vehicle-To-Everything, Markov Jump Linear Systems, Stability Analysis
\end{IEEEkeywords}

\section{Introduction}
\noindent Cooperative driving, enabled by adding communication to single Automated Vehicles (AVs), contributes to safety, and efficiency\cite{D2cav}. Two applications of Connected Automated Vehicles (CAVs) are Cooperative Adaptive Cruise Control (CACC) and Platooning. Both applications are designed to enhance driving comfort and safety by relieving the driver of the need to adapt to a cruising speed to match a group of vehicles on a freeway. The objectives of platooning are minimizing the expected value of spacing (and spacing error) between vehicles and keeping the desired distance, smoothing the engine/brake input to the platoon system, and keeping the velocity and acceleration in a reasonable and comfortable range. The main goal is to ensure that all platoon members move at the same speed while maintaining the desired formation geometry, which is specified by a desired intervehicle gap policy. A gap policy dictates the desired following spacing between vehicles. A constant spacing policy, which is called Platooning, makes vehicles maintain a constant distance from their immediate predecessor. A constant time headway gap, which is called CACC, dictates that the desired following distance be proportional to the speed of the vehicle, the higher the speed is, the longer the distance. Platooning is generally more sensitive than CACC to communication losses, mainly due to its very close coupling between vehicles\cite{RZ_impact}. The information exchange flow and Information Flow Topologies (IFT) define how the vehicles in a platoon exchange information with each other. However, changes in wireless communication attributes may cause loss of packets, which can introduce uncertainties in the vehicular networked systems and significantly impact control performance.

\begin{figure}[t]
\centering
\includegraphics[width=0.8\columnwidth]{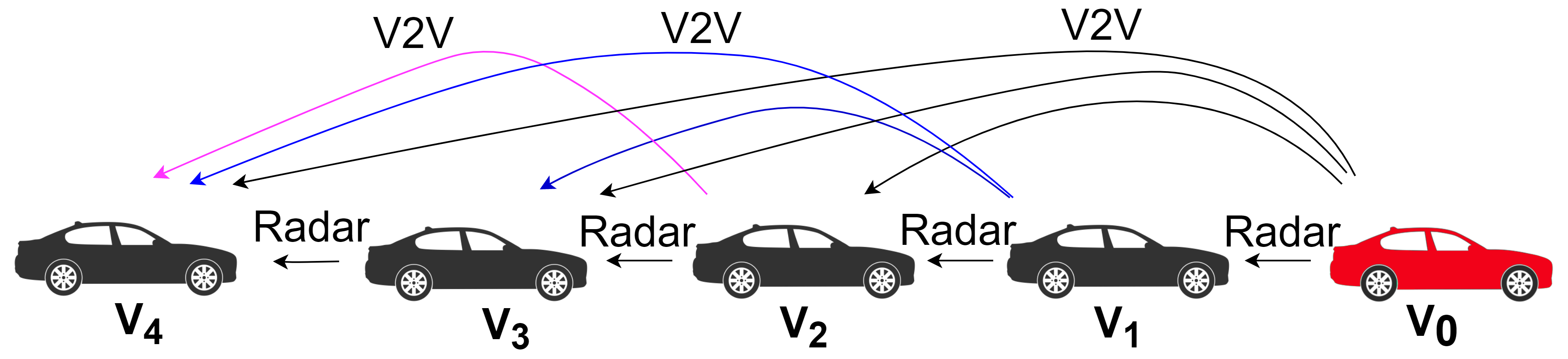}
\caption{Connectivity between members of a platoon. In addition to V2V communication, every vehicle has access to information of the preceding one via radar.}
\label{fig:Connectivity}
\end{figure}

A typical platoon considered here is shown in Figure \ref{fig:Connectivity}. Each vehicle in a platoon has a unique ID, i.e., $i = 0, 1, 2, \dots,N-1$, where i = 0 is the ID of the leader vehicle and the following vehicles have IDs $i \in\{1,2,\dots,N-1\}$ as in Figure \ref{fig:Connectivity}, where $N=5$. At each time slot t, all N vehicles broadcast their information to their following vehicles. The information could include precise speed information, acceleration, fault warnings, and warnings of forward hazards. In a vehicle platoon, the position, velocity, and acceleration have high importance. By using both sensor and communication information, the vehicle string has a quicker response to any change in road traffic. 

Studies have shown that CACC has the potential to substantially increase highway capacity when it reaches a moderate to high market penetration \cite{Impacts_CACC}. Bart \textit{et al.} concluded that CACC can potentially double the capacity of a highway lane at high-CACC market penetration \cite{CACC_Char}. Ramyar \textit{et al.} studied the impact of communication uncertainties on the string stability of a platoon. The entire platoon was formulated in the networked control system framework. Moreover, the stochastic packet loss is assumed as a Markov chain, and its effect on changing the platoon topology is modeled as a Markov jump linear system \cite{Unreliable_Communication}. In this paper, we study the impact of erasure links on the platoon with two main differences from the mentioned paper. First, we use realistic data for C-V2X technology, and second, we used All-predecessor-leader following (APLF) topology motivated by \cite{Stability_Scalability}. In this paper, Zheng \textit{et al.} studied the influence of IFT on the internal stability and scalability of homogeneous vehicular platoons. Segata \textit{et al.} provided a first evaluation of the impact of the scheduling algorithm on platoon control \cite{CV2X_Platooning}. Eben \textit{et al.} studied the internal stability and controller synthesis for vehicular platoons with generic communication topologies with a linear vehicle model \cite{Complex_Eigenvalues}.

In most state-of-the-art methods, packet loss is considered as an independent identically distributed  (i.i.d.) random process. But the temporal relationship between consecutive receptions has not been studied. In case of adverse channel status, if a packet is lost at any instance, losing the next packet is probable to some degree depending on the specific conditions. While assuming packet loss an i.i.d. process is legitimate, the temporal relationship can not be ignored because the communication link is a dynamic system and a random process that changes over time in a correlated manner\cite{Communication_Aspects}. Tractable mathematical models are required to accurately describe time variations of the link status and its dynamics. The main goal of this paper is to capture these time dependencies for C-V2X technology and utilize them for platoon performance analysis. We modeled communication links in terms of data reception status in the last timestep. At any given instance, the data is either received or lost. If the data is lost, the communication link is considered disconnected at that instance. We model the channel interruptions from link erasure and the resulting switching network topologies by Markov chains and derive their probability transition matrices from channels’ stochastic characterizations. The main contributions of this paper are as follows:
\begin{itemize}
\item We use Markov chain models to represent erasure links and the resulting switching network topologies. Transition probability matrices are derived from high fidelity simulation and empirical data to represent the network dynamics.
\item We study the impact of communication erasure links on platoon control performance for a platooning example with a standard consensus protocol tuned for ideal communication.
\item We present some early-stage theoretical results which bring insights on the relation between the i.i.d. modeling approach and the proposed Markovian approach.
\end{itemize}
\section{Communication Modeling}
\noindent C-V2X is the latest and most competitive contender of the ITS spectrum at this moment. It is expected to operate on 20MHz ITS band in the 5.9 GHz spectrum. This spectrum is dedicated towards vehicular operations by FCC \cite{fccNov2020} to provide vehicular safety and infotainment services. Hence this technology has been tailored with a specific goal of ensuring safety by accommodating the critical requirements posed in a vehicular environment. Due to the environment being subject to high-speed and abrupt change of channel condition, C-V2X has specific latency and periodicity requirements, and resource-access algorithms are built with keeping those minimums checked at all times irrespective of channel condition, speed, or traffic density. In this section, we will primarily focus on particular aspects of C-V2X which are relevant in the context of platooning and thus were considered in the modeling procedure.

C-V2X is developed on top of the Device-to-Device (D2D) communication stack released by 3GPP rel. 12 and upgraded for vehicular applications with modifications under rel. 14. The D2D communication enables C-V2X to operate both within the coverage of a base station (named as mode-3) via Uu interface and out of central coverage (named as mode-4) using PC5 (also known as sidelink) interface. This paper focuses on the mode-4 operation, where direct communication can be established among neighboring C-V2X equipped users via periodical exchange of basic safety messages (BSM). These messages are 180-300 bytes in general, with room for up to 1400 bytes for special purpose service messages. BSM contains spatial and dynamic status information about the user vehicle and is broadcast for most applications. Successful reception of BSM allows the neighboring receivers to make well-informed decisions and plan to minimize safety threats. The required periodicity is a minimum of 100ms, i.e. the vehicles are required to post updates via BSM at least once every 100ms and thus the neighboring vehicles can expect these updates to accurately track the neighboring User Equipments (UEs). As with any wireless network with a huge number of independent users, packet collision may occur due to interference or non-line-of-sight (NLOS) link between communicating pairs. Subsequent efforts that have been tested to mitigate these negative remarks include a congestion control mechanism that adjusts the periodicity by preemptively sensing the channel busy status \cite{toghi2019analysis}. C-V2X inherits this mechanism from its predecessor, DSRC, a wifi-based V2X protocol, which varies the BSM periodicity from 100ms up to 600ms depending on the neighborhood traffic density.

From C-V2X operational perspective, the air interface can be thought of as a grid-like pool of resources, divided into 1ms time-granularity and 12 frequency blocks each of 15kHz bandwidth. For BSM transmission, the frequency spectrum can be utilized in terms of subchannels i.e. a collection of blocks, the width of which can vary depending on the message size. A successful BSM transmission consumes multiple subchannels which the UEs in the vicinity can decode by sensing the power spectral density. Besides providing updated information, this structure also facilitates an UE's own transmission, particularly in distributed mode of operation (mode 4). Using a reception history, an UE can perform an autonomous resource selection mechanism standardized in 3GPP as semi-persistent scheduling. Upon reception of BSMs, an UE tracks the resource coordinates (in time and frequency) utilized by other UEs in the past 1-second history. Resources within this window are ranked in ascending order of sensed Reference Signal Received Power (RSRP). Untracked resources due to half-duplex are excluded from this list. A free resource or one that is used by a far UE will show a low RSRP and be ranked before a resource used by near UE. The lower a resource ranks, the lower the possibility of collision from using that resource. So among the least used 20\% resource blocks, one resource is randomly selected for the UE's next few transmissions. The number of consecutive transmissions with the same resource is subject to random selection within the range of [2 6]. An UE can also make a one-hop transmission using a random resource from the subset of available resources to break persistent collision with a particular user \cite{bazzi1shot}. This hop occurs at random intervals and the resource usage immediately returns to an SPS-based allocation from the consequent transmission.

For platooning applications, the subject vehicle group can be considered in close vicinity. So the implication of the resource selection pattern of neighboring UEs is of our particular interest in this work. Since all C-V2X equipped UEs use similar sensing-based resource selection, the subgroup of platooning UEs tend to receive packets with similar RSRP and thus generates a similarly ranked list of resources for their transmission. This intrinsic system feature was considered in our modeling. This pattern in resource usage is affected by distance from receivers and message dissemination rate as a function of traffic density.

Most works on vehicle platoon controller design and performance analysis for general IFT rely on the assumption that the communication environment is perfect. We model the communication link with a first-order Markov model which captures the time correlations in time prevalent in vehicular networks. The Inter-Packet Gap (IPG) metric is chosen to assess the system performance in terms of both network reliability and latency. IPG is defined as the interval between two consecutive receptions from a particular UE. To successfully track a neighboring UE, it is important to have an established link with that UE. Thus, IPG is a key performance indicator of how well a car can be tracked by an ego vehicle. Ideally, IPG is desired to be as small as possible and feasible with the density consideration, floored to 100ms as per standard. We model IPG with Markov Chain for communication performance evaluation. Each vehicle $i$, will use the latest received information about its neighboring vehicles in making dynamic decisions of its local control. If information about a vehicle is older than one timestep, subject vehicle $i$ will use interpolated status from the last received packet from that vehicle. Information older than 1000ms (=1sec) is considered obsolete and that vehicle is assumed out of the relevant network region.

\subsection{Markov Chain Modeling for IPG}
\noindent In C-V2X, the transmission opportunities are lower-bounded by 100ms time intervals. In other words, IPG fluctuates between the closest multiple of 100ms. In a low-density situation, vehicles are capable of maintaining 100ms IPG, however, an increase in the vehicle density spreads the distribution of the IPG as much as over 1s gap between two consecutive packet receptions for neighboring pairs. In our modeling, data points with IPG higher than 1000 ms were omitted. To emphasize, in reality, C-V2X devices do not have periodicities that are almost equal to a multiple of 100ms, so resources will overlap less frequently, leading to reduced IPG.

It has been established together with experimental verifications that Wireless channels can be reasonably represented by Markov chain models. In particular, Markovian models are shown to be useful representations for binary channels, radio communication systems, and fading channels\cite{Markov_modeling,Erasure_Channels}.

For a given communication link, packet loss is a random process. For simplicity, it is commonly assumed that the process is i.i.d.. However, in a vehicular network, packet loss probability at the current timestep depends on link conditions of the recent past due to environmental factors or communication system management. Typically environment conditions are temporally related since an underlying cause for a link interruption (such as an obstacle in a communication antenna’s line of sight or by two vehicles using the same resource) creates a dynamic relationship in consecutive transmissions. This temporal dependence is modeled in this paper by a Markov chain so that a presently connected link can have different packet loss properties from the case when this link is broken. For platooning application purposes, the vehicle considers the immediate vehicle in lane (front vehicle) and uses the information received from all vehicles in the range of communication as in Figure \ref{fig:Connectivity}.

Strictly speaking, the state-transition probabilities $\Prob(IPG_{m+1}|IPG_{m})$ should be derived from integrating the joint pdf of the multi-variate random variable over two consecutive IPG indices and over the desired regions. However, such a joint pdf is not available in closed form. Instead, one can perform Monte Carlo techniques to find the empirical estimate of probability $\Prob(IPG_{m+1}|IPG_{m})$. This method enables a better approximation of communication links in real-time applications.

A valid and realistic physical layer realization is of great importance in a vehicular communication simulation. We used a high-fidelity link-level event-based ns-3 network simulator \cite{rouil2016long} to simulate the IPG values and to implement the distributed congestion control algorithm specified in SAE J2945/1 standard on top of the C-V2X stack \cite{toghi2019analysis,Performance_CV2X,CV2X_CC}. The Markov chain model is used to represent temporally dependent link properties, link transmission scheduling strategies, co-channel interference avoidance methods, among others.

Based on factors like distance, communication rate, and traffic density, we now specify the Markov chain $\Theta_{n}$ for IPG values. $\Theta_{n}$ is a Markov chain taking values in a finite set $IPG \in \{100 ms, 200 ms, \dots, 1000 ms\}$, and $\Theta_{n}$ is a 10 by 10 transition probability matrix. Also, the effect of the initial state fades out for a long sequence. Therefore, one can generate data according to its steady-state distribution. Under the first-order Markovian assumption, we need the probability of the next packet reception given the time of previous packet reception, or in other words, $\Prob(IPG_{m+1}|IPG_{m})$. A first-order Markovian assumption means that given the previous IPG value, the current value is statistically independent of all other past and future IPG values.

\subsection{Markov Chain Modeling for Connectivity}
\noindent The network interconnection can be represented by an upper triangular random topology matrix ($J$) whose element $(i,j)$ indicates whether vehicle $j$ is connected to vehicle $i$ and can obtain the information from it or not. The components of $J$ can be either $0$ or $1$, depending on accessing to the information of preceding vehicles ($J_{i,j} = 1$) or not ($J_{i,j} = 0$). Information access can be either be via a communication link or radar information. If we consider two consecutive vehicles, we always have access to radar information. If we consider far vehicles, let's say vehicles with large distances in between, link status will be a random variable based on IPG Markov Chain, $\Theta_{n}$. (see Figure \ref{fig:Connectivity}, where we have 6 communication links and $2^{6}$ different connectivity values)

Considering i.i.d. links and running $\Theta_{n}$ for each link, each element of $J$ is calculated for any time instant, showing whether the $k^{th}$ link is up at time $t$. The total number of different states in $J$ is $2^{N}$, where $N$ is the number of links. In this paper, $J$ has 64 different states. State 64 is fully connected mode and state 1 is fully disconnected mode. In order to analyze the mean square stability of the networked system, we need the transition probability matrix (TPM) of the connectivity values. The probability for each connectivity is generated from individual link connection probabilities. In other words, the Markov Chain for connectivity is a random function of the IPG Markov Chain, $\Theta_{n}$. 

\section{Vehicle Modeling and Control}
\noindent In this section, we study how far the stability of the considered platooning application can be analyzed mathematically. We describe a system model, summarize the standard Markov Jump Linear Systems (MJLS) method and discuss its limitation due to the exponential growth of state space size. We propose an alternative problem stetting whose analysis is tractable but which comes at the price of a slow convergence rate.

\subsection{System Model}
\label{sec:system_model}
\noindent We consider a double integrator model governed by the dynamics
\begin{equation}
\label{eqn:agent_dynamics}
    \begin{split}
      \dot{x} &= v \\
      \dot{v} &= u
    \end{split}
\end{equation}
with a first-order-hold (FOH) discretization with a sampling time $T$, the discrete-time dynamics can then be represented as
\begin{equation} \label{eqn:agent_dynamics_ZOH}
\begin{split}
  x_{k+1} &= x_{k} + T v_k + \frac{T^2}{2}u_k\\
  v_{k+1} &= v_k+T u_k
\end{split}
\end{equation}

We consider the standard second order consensus protocol (See section 7.4 from \cite{FB-LNS}) with reference position offset $r^{\textnormal{pos}}$ and reference velocity offset $r^{\textnormal{vel}}$ 
\begin{equation} \label{eqn:consensus-protocol}
    u_k=K_p L_k(r^{\textnormal{pos}}_k-x_k)-K_dL_k(r^{\textnormal{vel}}_k-v_k),
\end{equation}
where $L_k$ is the Laplacian matrix \cite{FB-LNS} at time $k$ and $K_p$, $K_d$ are control gains.
With state variable $z_k=[x_k \; v_k]^T$ and $r_k=[r^{\textnormal{pos}}_k \;
r^{\textnormal{vel}}_k]^T$, the closed loop dynamics can then be represented as
\begin{equation} \label{eqn:closed-loop MJLS}
z_{k+1}=A_k z_k+B_k r_k
\end{equation}
where,
\begin{equation}
\begin{split}
A_k&=\begin{bmatrix}
I-\frac{K_pT^2}{2}L_k & T I-\frac{K_dT^2}{2}L_k\\
-K_pTL_k & T I-K_dTL_k\\
\end{bmatrix}\\
B_k&=
\begin{bmatrix}
\frac{K_pT^2}{2}L_k & \frac{K_dT^2}{2}L_k\\
K_pTL_k & K_dTL_k\\
\end{bmatrix}
\end{split}
\end{equation}

In this paper, the reference position or the desired intervehicle distance varies in multiple of 25m within the range [25m 175m]. The lead vehicle in the platoon $(V_0)$  represents the speed reference for the following platoon members. The acceleration or deceleration of the leader can be viewed as disturbances in a platoon. In our studies simulation step is $100 ms$ which is the ideal communication periodicity.

\begin{figure}
\centering
\includegraphics[width=1\columnwidth]{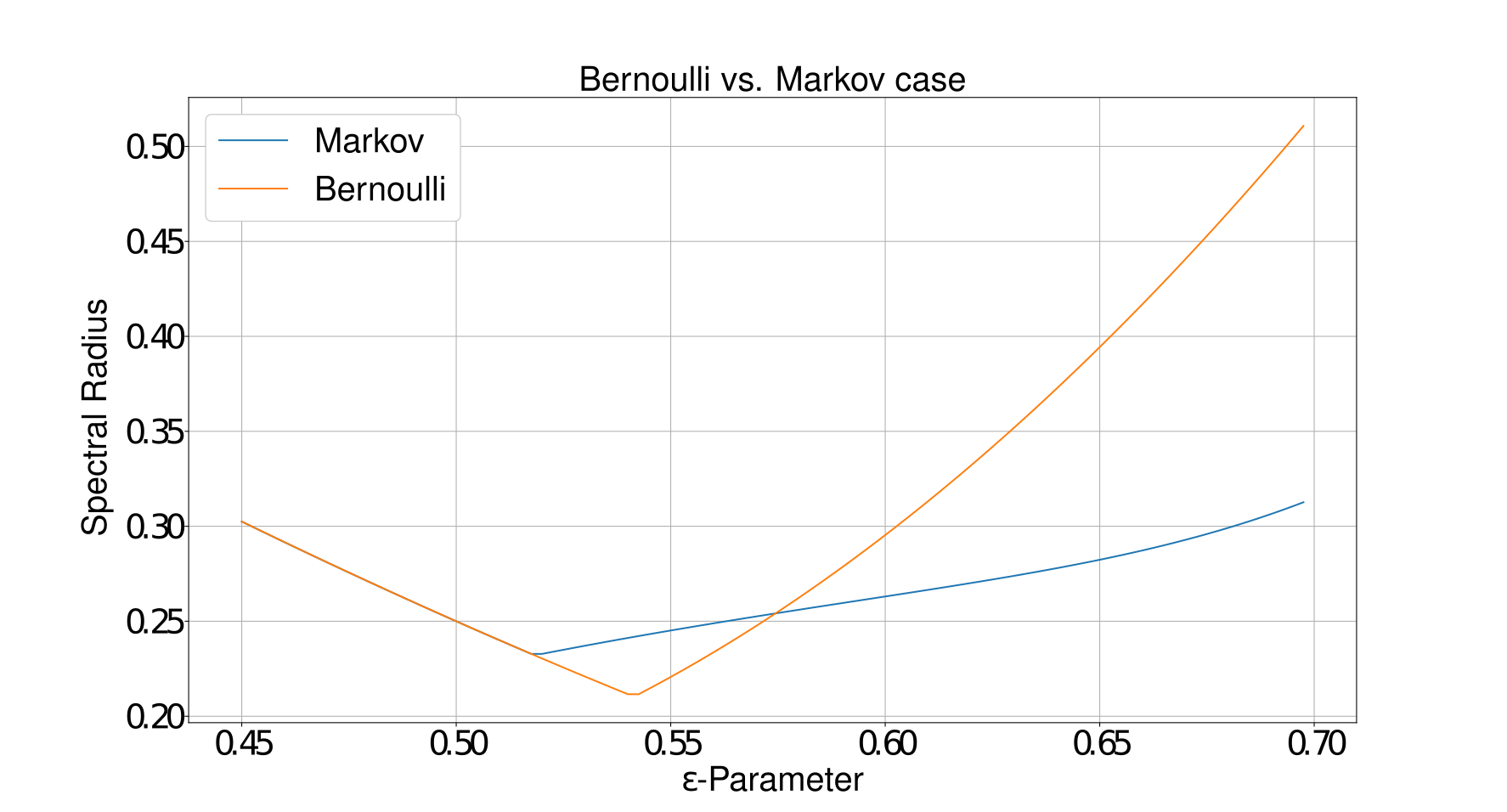}
\caption{Spectral radius over the $\varepsilon$-parameter of Section \ref{section:BernoulliMarkov} for both Bernoulli and Markov system. The example shows that, in general, the Bernoulli system is not more conservative than the MJLS and vice versa.}
\label{Figure:Counterexample}
\end{figure}

\subsection{Known Markov Jump Linear Systems (MJLS) Analysis}
\noindent We assume that the Laplacian matrix $L_k$ is a random process described by a Markov chain and sampled for each $k$ from a finite set $\mathcal{L}=\{\mathcal{L}_1,\mathcal{L}_2,\cdots ,\mathcal{L}_N \}$. Since the matrices $A_k$ and $B_k$ are functions of $L_k$, these are random process as well sampled from appropriately defined sets $\mathcal{A}=\{ \mathcal{A}_1,\mathcal{A}_2,\cdots ,\mathcal{A}_N \}$ and $\mathcal{B}=\{ \mathcal{B}_1,\mathcal{B}_2,\cdots ,\mathcal{B}_N \}$. 
Let $P\in \mathbf{R}^{N \times N}$ denote the TPM for the underlying Markov chain such that for all $k$,  $p_{ij}=\Prob\{L_{k+1}=\mathcal{L}_j|L_k=\mathcal{L}_i\}$.
Dynamics \eqref{eqn:closed-loop MJLS} is then a Markovian Jump Linear System (MJLS) \cite{Costa_Book}.

For the MJLS \eqref{eqn:closed-loop MJLS}, define a matrix $\mathcal{S}=(P' \kronecker I)\textnormal{blkdiag}(\mathcal{A}_1\kronecker \mathcal{A}_1, \mathcal{A}_2 \kronecker \mathcal{A}_2,\cdots ,\mathcal{A}_N \kronecker \mathcal{A}_N).$ It has been shown \cite{Costa_Book} that the dynamics \eqref{eqn:closed-loop MJLS} are mean square stable if and only if the spectral radius of $\mathcal{S}$ is less than 1, in symbols
    \begin{equation}
        \label{eqn:stability_criterion}
        \varrho(\mathcal{S}) < 1.
    \end{equation}


\subsection{Dropping Messages to Deal with State Space Size}
\noindent The size of $\mathcal{S}$ significantly increases when the state space size $N$ or the system matrices $\mathcal{A}_i$ are increased. A direct numerical evaluation of (\ref{eqn:stability_criterion}) thus becomes intractable at some point. In the following, we study a variant $\tilde z_k$ of the given MJLS where only each $n_0$-th transmission is considered and all others are discarded. The motivation behind that variant is to decorrelate state space dependencies with sufficiently large $n_0$. The system thus behaves approximately like randomly choosing states according to the stationary state space probabilities.

With Proposition~\ref{prp:r2} we derive a sufficient condition for mean square stability of that variant.
The result assures that dropping a sufficient number of consecutive messages can be used as a stabilizing strategy. However, the price of dropping messages is that the (asymptotic) convergence rate is  significantly slower. Note that system parameters for stability of the studied variant due to Proposition~\ref{prp:r2} are not necessarily a conservative parameter setting for stability of the original MJLS, cf. Section \ref{section:BernoulliMarkov}. 

\begin{prp}\label{prp:r2}
Let 
    \begin{equation}\label{eqn:closed-loop MJLS-2}z_{k} = A_k z_{k-1} + B_k r_{k-1},\;\,\; k\in \N, z_0 \in \R^n,\end{equation}
evolve according to section \ref{sec:system_model} and assume the underlying Markov chain is irreducible and positive recurrent with stationary transition probabilities $P$.

Let the stationary probability distribution be given by the probability mass function (pmf) $\pi$ and denote by $A_1^{\pi}, A_2^{\pi}, A_3^{\pi}, \dots$ the (random) matrices sampled independently from $\mathcal{A}$ according to $\pi$ (Bernoulli process).

If $\mathcal{R}_2(\pi) := \varrho(\E[A_1^{\pi} \otimes A_1^{\pi}]) < 1,$ then there is $n_0 \in \N$ such that the process $\tilde z_k, \;k\in \N,\; \tilde z_0 = z_0,$ given by
    $$\tilde z_k =  \begin{cases} z_k, \;\,\; \text{ if } k\, \vert\, n_0, \\
    \tilde z_{k-1},\;\,\; \text{else} \end{cases} $$
is mean square stable.
\end{prp}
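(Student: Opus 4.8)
The plan is to pass to the sub-sampled ``checkpoint'' process $\hat z_m := \tilde z_{m n_0}$, to recognise it as an ordinary MJLS whose underlying transition matrix is $P^{n_0}$, and then to exploit the fact that, as $n_0\to\infty$, $P^{n_0}$ tends to $\ones\pi'$, the rank-one stochastic matrix with every row equal to the stationary pmf $\pi$, so that the mean-square stability test \eqref{eqn:stability_criterion} applied to the checkpoint system degenerates exactly into the condition $\mathcal{R}_2(\pi)<1$ of the hypothesis. First I would note that $\tilde z_k$ is held constant on every block $\{(m-1)n_0+1,\dots,m n_0\}$ and carries out a single dynamics update per block, so that $\sup_k \E\,\norm{\tilde z_k}^2 = \sup_m \E\,\norm{\hat z_m}^2$ and $\hat z_m = A_{mn_0}\hat z_{m-1}+B_{mn_0}\,r_{mn_0-1}$. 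Consequently $\tilde z_k$ is mean-square stable if and only if $\hat z_m$ is, and $\hat z_m$ is an MJLS of exactly the form of Section~\ref{sec:system_model} driven by the sampled mode process $\hat\theta_m:=\theta_{mn_0}$, whose one-step transition probability matrix is $P^{n_0}$ while the coefficient set is still $\mathcal{A},\mathcal{B}$.

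By the criterion of \cite{Costa_Book} recalled above, $\hat z_m$ is mean-square stable if and only if $\varrho(\mathcal{S}_{n_0})<1$, where $\mathcal{S}_{n_0} := \big((P^{n_0})'\kronecker I\big)\,\textnormal{blkdiag}(\mathcal{A}_1\kronecker\mathcal{A}_1,\dots,\mathcal{A}_N\kronecker\mathcal{A}_N)$ is a matrix of fixed size whose entries depend on $n_0$ only through $P^{n_0}$. Since the chain is finite, irreducible and aperiodic, $P^{n_0}\to\ones\pi'$ entrywise, hence $\mathcal{S}_{n_0}\to\mathcal{S}_\infty$, the matrix whose $(j,i)$ block is $\pi_j\,(\mathcal{A}_i\kronecker\mathcal{A}_i)$. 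A short linear-algebra computation then shows that $\mathcal{S}_\infty$ has range contained in the subspace $V:=\{(\pi_1 w,\dots,\pi_N w) : w\in\R^{n^2}\}$, on which (under the natural identification $V\cong\R^{n^2}$) it acts as $\sum_i \pi_i\,\mathcal{A}_i\kronecker\mathcal{A}_i = \E[A_1^{\pi}\kronecker A_1^{\pi}]$; hence the nonzero eigenvalues of $\mathcal{S}_\infty$ are exactly those of $\E[A_1^{\pi}\kronecker A_1^{\pi}]$, so $\varrho(\mathcal{S}_\infty)=\mathcal{R}_2(\pi)$.

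The conclusion then follows from continuity of the spectral radius in the matrix entries: $\varrho(\mathcal{S}_{n_0})\to\mathcal{R}_2(\pi)<1$, so one may fix any $n_0$ with $\varrho(\mathcal{S}_{n_0})<1$. For that $n_0$ the checkpoint MJLS $\hat z_m$ is mean-square stable, and since the reference offsets $r_k$ range over a bounded set the forced response adds only a bounded term, giving $\sup_m\E\,\norm{\hat z_m}^2<\infty$ and hence mean-square stability of $\tilde z_k$. I expect the real work to be the limiting argument of the second paragraph: writing down the limit operator $\mathcal{S}_\infty$, handling its rank degeneracy carefully so as to pin down $\varrho(\mathcal{S}_\infty)=\mathcal{R}_2(\pi)$, and justifying $P^{n_0}\to\ones\pi'$, which needs aperiodicity of the chain — if the chain is periodic one would instead sample along a suitable subsequence of $n_0$, or insert a uniformly random phase offset, so that the relevant matrix averages still converge to their $\pi$-weighted values.
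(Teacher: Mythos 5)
Your proof is correct in substance and follows essentially the same route as the paper: decimate at multiples of $n_0$, observe that the decimated mode process is Markov with TPM $P^{n_0}$, let $P^{n_0}\to \ones\pi'$ and use continuity of the spectral radius, identifying the spectral radius of the limiting operator with $\mathcal{R}_2(\pi)$. Two differences are worth noting. First, the transfer of stability from the checkpoint system back to $\tilde z_k$: the paper routes this through the norm-based criterion (assertion 1 of Proposition~\ref{lem:stability_criterions}), which is precisely why that auxiliary proposition is proved, whereas you bypass it by noting that $\tilde z_k$ is piecewise constant so its second moments coincide with those of $\hat z_m$, and then apply the standard test $\varrho(\mathcal{S}_{n_0})<1$ directly to the checkpoint MJLS with TPM $P^{n_0}$; this is a legitimate simplification (your block computation showing that the nonzero spectrum of $\mathcal{S}_\infty$, whose $(j,i)$ block is $\pi_j(\mathcal{A}_i\kronecker\mathcal{A}_i)$, equals that of $\E[A_1^\pi\kronecker A_1^\pi]$ is exactly the fact the paper invokes as ``known''). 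Second, you correctly flag that $P^{n_0}\to\ones\pi'$ needs aperiodicity, which the statement does not assume and the paper's proof uses silently; be aware, though, that your ``subsequence'' repair does not work, since for a chain of period $d>1$ the limits of $P^n$ along residue classes are supported on the cyclic classes and never equal $\ones\pi'$ (and indeed the proposition can fail for periodic chains, e.g.\ a two-state alternating chain where $\mathcal{R}_2(\pi)<1$ but $\varrho(\mathcal{A}_1\mathcal{A}_2)\ge 1$), so the honest fix is to add aperiodicity (or a lazy/randomized-phase modification) as a hypothesis — an issue inherited from the paper rather than introduced by you. Finally, your reading of $\tilde z_k$ as performing a single update $A_{mn_0}$ per block (messages in between dropped) matches the paper's proof and the surrounding discussion, even though the literal formula ``$\tilde z_k = z_k$ if $n_0$ divides $k$'' would say something different.
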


\begin{rem}
\begin{enumerate}
    
    \item The process $\tilde z_k$ can be regarded as Markov process by introducing $(n_0-1)N$ "waiting" states $s_{j,l},$ where $j = 1, \dots, n_0-1,\, l = 1, \dots, N$ with $\Prob(L_{k+1} = s_{1,l} \rvert L_{k} = \mathcal{L}_l) = 1,\;\Prob(L_{k+1} = s_{j+1,l}\rvert L_{k} = s_{j,l}) = 1 $ and $\Prob(L_{k+1} = \mathcal{L}_j \rvert L_{k} = s_{n_0-1, l}) = (P^{n_0})_{lj},\; l = 1, \dots, N,\,j= 1, \dots, n_0-2,\; k\in \N.$
\item The proof shows that the result also holds for any $\hat n_0 \ge n_0.$
\end{enumerate}
\end{rem}
The proof requires the stability criterion 1) of the following proposition, which is also of interest in its own right.

\begin{prp}\label{lem:stability_criterions}
Let $z_k$ be as in Proposition \ref{prp:r2} with $B_k=0, k\in \N.$ Moreover, consider a more general initial state $z_0 \in L^2(\C^n)$, i.e. $z_0$ is a $\C^n$ valued random variable such that $\E \norm{z_0}^2 < \infty.$ 
The following are equivalent:
\begin{enumerate}
    \item $\limsup_{k\rightarrow \infty}(\E\norm{A_k \cdots A_1 }^2)^{1/k} < 1.$ 
    \item For all $z_0 \in L^2(\C^n),$ which may depend on any of the Markov chain's random initial states, we have
       \begin{equation}\label{eqn:abs_convergence}\sum_{k=0}^{\infty}\E \norm{z_k}^2< \infty.\end{equation}
        
    \item $\varrho(\mathcal{S}) < 1.$
    
    \item There is $\alpha \ge 1$ and $0< \zeta < 1$ such that for all initial vectors $z_0\in L^2(\C^n),$ which may depend on any of the Markov chain's random initial states, we have
        $$E\norm{z_k}^2 \le  \alpha  \zeta^k \cdot\E \norm{z_0}^2 \mbox{ for all } k \in \N.$$
\end{enumerate}
\end{prp}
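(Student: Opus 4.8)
\noindent\emph{Proof strategy.} The plan is to close the loop $3)\Rightarrow 4)\Rightarrow 2)\Rightarrow 3)$ and, separately, to add $4)\Rightarrow 1)\Rightarrow 2)$, which together give all four equivalences. Everything rests on the lifted second-moment recursion. Writing $\theta_k$ for the underlying Markov chain and $Q_k^i := \E[z_k z_k^{*}\ones\{\theta_k = i\}]\in\C^{n\times n}$, the Markov property together with the realness of the $\mathcal{A}_i$ gives $Q_{k+1}^{j} = \sum_i p_{ij}\,\mathcal{A}_i Q_k^i \mathcal{A}_i^{*}$; stacking $q_k := (\textnormal{vec}(Q_k^1),\dots,\textnormal{vec}(Q_k^N))$ this is the linear recursion $q_{k+1} = \mathcal{S}\,q_k$ with $\mathcal{S}$ the matrix defined above. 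I would then record two elementary facts used throughout: (i) each $Q_k^i$ is Hermitian positive semidefinite, so $\norm{q_k}\le\sum_i\textnormal{tr}(Q_k^i)=\E\norm{z_k}^2$; and (ii) $\E\norm{z_k}^2 = u^{T}q_k$, where $u := \ones_N\kronecker\textnormal{vec}(I_n)$ lies in the interior of the dual of the $\mathcal{S}$-invariant cone $\mathcal{C}$ of stacked PSD matrices, so that $u^{T}q \ge c\norm{q}$ for some $c>0$ and all $q\in\mathcal{C}$.

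Given these, the first three implications are short. For $3)\Rightarrow 4)$: $\varrho(\mathcal{S})<1$ yields $\norm{\mathcal{S}^k}\le\alpha_0\zeta^k$ for some $\alpha_0\ge 1$ and $\zeta\in(\varrho(\mathcal{S}),1)$, hence by fact (i) $\E\norm{z_k}^2 = u^{T}\mathcal{S}^{k}q_0 \le \norm{u}\alpha_0\zeta^k\norm{q_0}\le(\norm{u}\alpha_0)\zeta^k\E\norm{z_0}^2$, which is 4) after absorbing constants into a single $\alpha\ge 1$. Then $4)\Rightarrow 2)$ is just summing a geometric series. For $4)\Rightarrow 1)$ I would instantiate 4) with the deterministic initializations $z_0 = e_j$ and $\theta_0 = i$, obtaining $\E[\norm{A_k\cdots A_1 e_j}^2\mid\theta_0 = i]\le\alpha\zeta^k$; summing over $j=1,\dots,n$ controls the conditional Frobenius norm, and since $\norm{\cdot}\le\norm{\cdot}_F$ and $\sum_i\pi_i = 1$ this gives $\E\norm{A_k\cdots A_1}^2\le n\alpha\zeta^k$, so the $\limsup$ in 1) is $\le\zeta<1$. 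For $1)\Rightarrow 2)$ I would condition on $\theta_0$: with $z_0 = v_{\theta_0}$ deterministic in each mode and using irreducibility (so $\pi_{\min}>0$), for $k$ large and any $\gamma$ above the $\limsup$ one has $\E[\norm{z_k}^2\mid\theta_0=i]\le\norm{v_i}^2\E[\norm{A_k\cdots A_1}^2\mid\theta_0=i]\le\norm{v_i}^2\gamma^k/\pi_{\min}$; averaging over the modes gives $\E\norm{z_k}^2\le\pi_{\min}^{-1}\gamma^k\E\norm{z_0}^2$, and 2) follows (the finitely many small-$k$ terms are trivially finite).

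The delicate step — the one I expect to be the main obstacle — is $2)\Rightarrow 3)$. From 2), taking $\theta_0 = i$ almost surely and $z_0 = v$ realizes an arbitrary single-block rank-one $q_0\in\mathcal{C}$ and shows $\sum_k u^{T}\mathcal{S}^k q_0<\infty$; by fact (ii) and $\mathcal{S}\mathcal{C}\subseteq\mathcal{C}$ this upgrades to $\sum_k\norm{\mathcal{S}^k q_0}<\infty$, so $\mathcal{S}^k q_0\to 0$ for all such $q_0$, and since these span (as a cone) the $\mathcal{S}$-invariant space of stacked Hermitian matrices, $\mathcal{S}^k\to 0$ there and its spectral radius on that subspace is $<1$. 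The subtle point is that one must still conclude $\varrho(\mathcal{S})<1$ on the full space, i.e., that the spectral radius of the positive operator $\mathcal{S}$ is attained on the PSD cone; this is precisely the Perron--Frobenius / Krein--Rutman-type reduction for cone-preserving operators, and I would either invoke it directly or, equivalently, quote the equivalence of 2) and 3) from the mean-square stability theory of \cite{Costa_Book}, leaving $4)\Rightarrow 1)\Rightarrow 2)$ as the genuinely new content to be verified as above.
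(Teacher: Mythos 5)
Your proposal is correct in substance, and it partially retraces and partially departs from the paper's route. The paper itself proves only 1) $\Rightarrow$ 2) and 4) $\Rightarrow$ 1), citing \cite{Costa_Book} wholesale for the equivalence 2) $\Leftrightarrow$ 3) $\Leftrightarrow$ 4); your versions of those two implications coincide with the paper's (4) $\Rightarrow$ 1) via the Frobenius norm with deterministic per-mode initializations, and 1) $\Rightarrow$ 2) by conditioning on $\theta_0$ — though the paper avoids your $\pi_{\min}^{-1}$ factor by bounding $\sum_i \E[\norm{A_k\cdots A_1}^2\mid \theta_0=\mathcal{L}_i]\,\pi_{0i}\,\E\norm{Y_i}^2$ directly by $\E\norm{A_k\cdots A_1}^2\max_i\E\norm{Y_i}^2$, and note your $\pi_{\min}$ should refer to the chain's initial distribution, not the stationary one). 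What you add is a self-contained proof of 3) $\Rightarrow$ 4) $\Rightarrow$ 2) through the lifted second-moment recursion $q_{k+1}=\mathcal{S}q_k$, whose vectorization indeed reproduces exactly the paper's $\mathcal{S}=(P'\kronecker I)\,\textnormal{blkdiag}(\mathcal{A}_1\kronecker\mathcal{A}_1,\dots,\mathcal{A}_N\kronecker\mathcal{A}_N)$; this buys independence from \cite{Costa_Book} for those directions at the cost of verifying the cone facts (i)--(ii), which are fine. Two touch-ups are needed. First, in 1) $\Rightarrow$ 2) you treat only $z_0$ deterministic per mode, while assertion 2) quantifies over all $z_0\in L^2(\C^n)$ depending on the initial state; the fix is the factorization $\E[\norm{A_k\cdots A_1}^2\norm{z_0}^2\mid\theta_0=\mathcal{L}_i]=\E[\norm{A_k\cdots A_1}^2\mid\theta_0=\mathcal{L}_i]\,\E[\norm{z_0}^2\mid\theta_0=\mathcal{L}_i]$ via conditional independence — exactly the paper's $Y_i$ device, which also matters because Costa's 2) $\Rightarrow$ 3) needs initial conditions rich enough to encode arbitrary tuples of positive semidefinite matrices. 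Second, your worry about 2) $\Rightarrow$ 3) is overstated: no Krein--Rutman-type result is needed, since Hermitian matrices span $\C^{n\times n}$ over $\C$ and $\mathcal{S}$ is a fixed complex-linear matrix, so $\mathcal{S}^k q_0\to 0$ on your rank-one per-mode generators already forces $\mathcal{S}^k\to 0$ on all of $\C^{Nn^2}$ and hence $\varrho(\mathcal{S})<1$; alternatively, quoting \cite{Costa_Book} for this direction is precisely what the paper does, so your fallback is consistent with it.
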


\begin{figure}
\centering
\includegraphics[width=1\columnwidth]{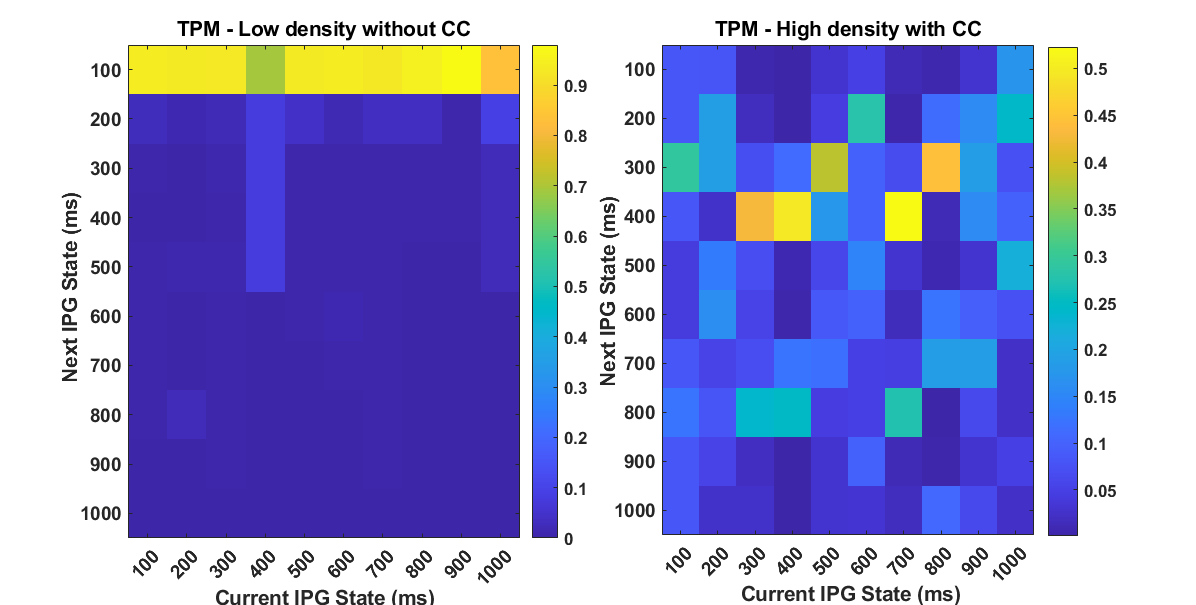}
\caption{Transition probability matrix of IPG Markov Chain for Low density vehicle environment without congestion control in 50 m distance (left side) and transition probability matrix of IPG Markov Chain for High density vehicle environment with congestion control in 500 m distance (right side)}
\label{Figure:TPM}
\end{figure}

\begin{figure*}[h]
\centering
 \includegraphics[width=1\textwidth]{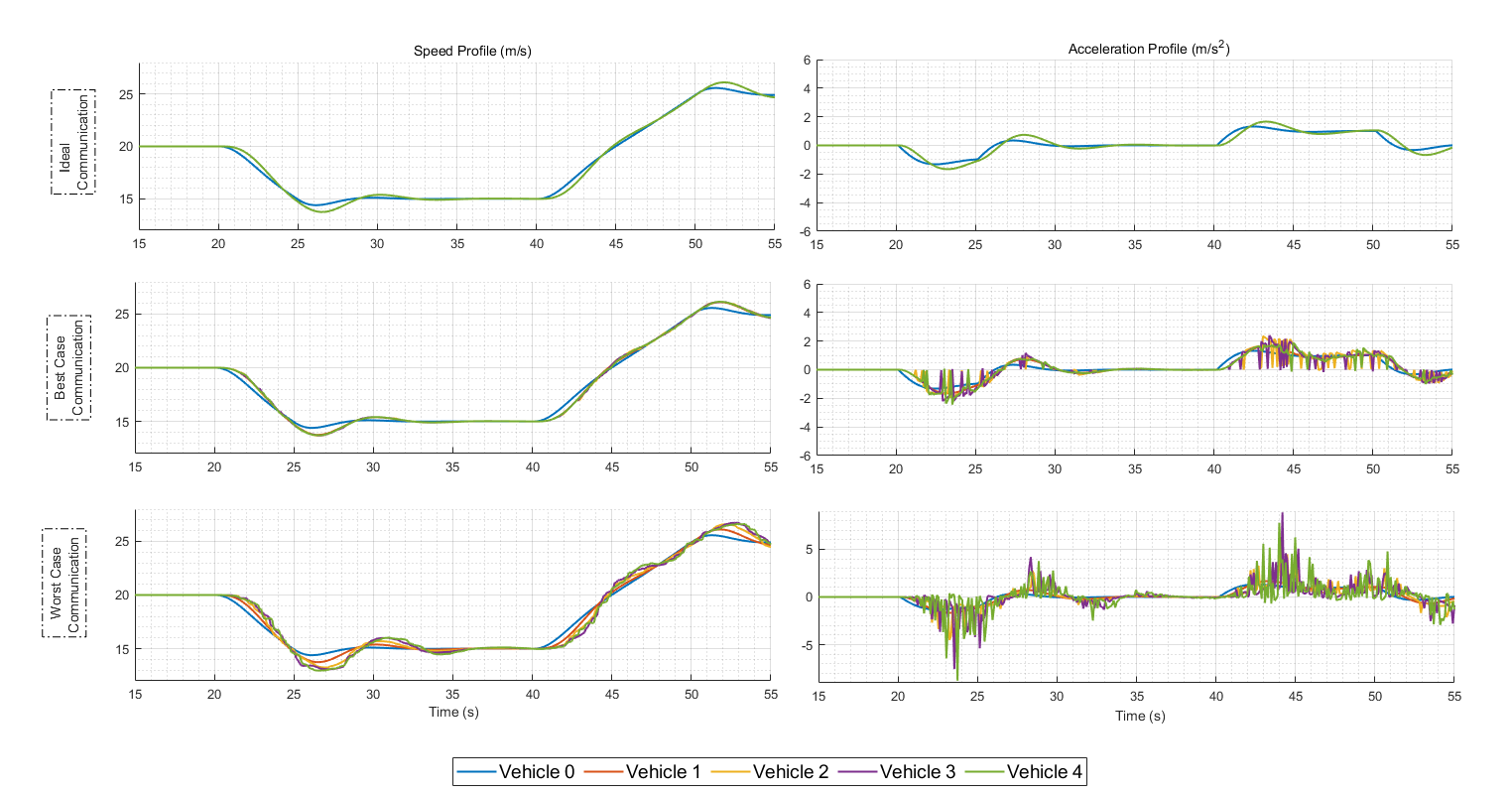}
    \caption{Speed and Acceleration profiles for Five-vehicle test for the ideal communication (No loss), best case communication ( Low vehicular density with Baseline communication and with 25 meters inter vehicle distance), and worst-case communication ( High vehicular density with Congestion Control applied and within 175 meters inter vehicle distance) from top to bottom}
    \label{Figure:profile}
\end{figure*}

\begin{rem}
The expression on the left-hand side of assertion 1) is well-defined as it is independent of the random initial choice of the Markov chain. This follows from arguments similar to the ones elaborated in the proof.
\end{rem}

\begin{proof}[Proof of Proposition  \ref{lem:stability_criterions}]
For the equivalence of 2-4) we refer to \cite{Costa_Book}.

It remains to show that 1) implies 2), and that 4) implies 1). 

"$1) \Rightarrow 2)$": 
Denote the random initial state of the Markov chain by $\theta_0.$ From the proofs of  \cite[Proposition 3.24 and 3.23]{Costa_Book}, it suffices to assume that \eqref{eqn:abs_convergence} holds for initial conditions $z_0 \in L^2(\C^n)$ of the following form:
Let $z_0(\omega) := Y_i(\omega) \mbox{ if }\theta_0(\omega) = \mathcal{L}_i, i = 1, \dots, N,$ for random variables $Y_1, \dots, Y_N\in L^2(\C^n)$ which are independent of the Markov states. 

This ensures that we can encode any $N$-tuple of positive semidefinte matrices in terms of $\E z_0 \bar z_0',$ as required in \cite{Costa_Book}.

Let $\pi_{0i} = \Pr(\theta_0 = \mathcal{L}_i)$. Then, exploiting independence in the second step below, we have
$$\begin{aligned}
    \E \norm{z_k}^2 &\le \sum_{i=1}^{N}\E[\norm{A_k\cdots A_1}^2 \cdot \norm{z_0}^2 \rvert \theta_0 = \mathcal{L}_i] \cdot  \pi_{0i}\\
    &= \sum_{i=1}^N \E[\norm{A_k\cdots A_1}^2 \rvert \theta_0 = \mathcal{L}_i]\cdot  \E \norm{Y_i}^2\cdot \pi_{0i}\\
    &\le \E\norm{A_k\cdots A_1}^2 \cdot  \max_{i=1}^N \E \norm{Y_i}^2.
\end{aligned}
$$
Assertion 2) now follows from the root and direct comparison test.

"$4) \Rightarrow 1)$": This is a simple consequence of the fact that we are free to choose the Frobenius norm in assertion 1).
%
%
%
%
%
%
\end{proof}

\begin{figure}
\centering
\includegraphics[width=1\columnwidth]{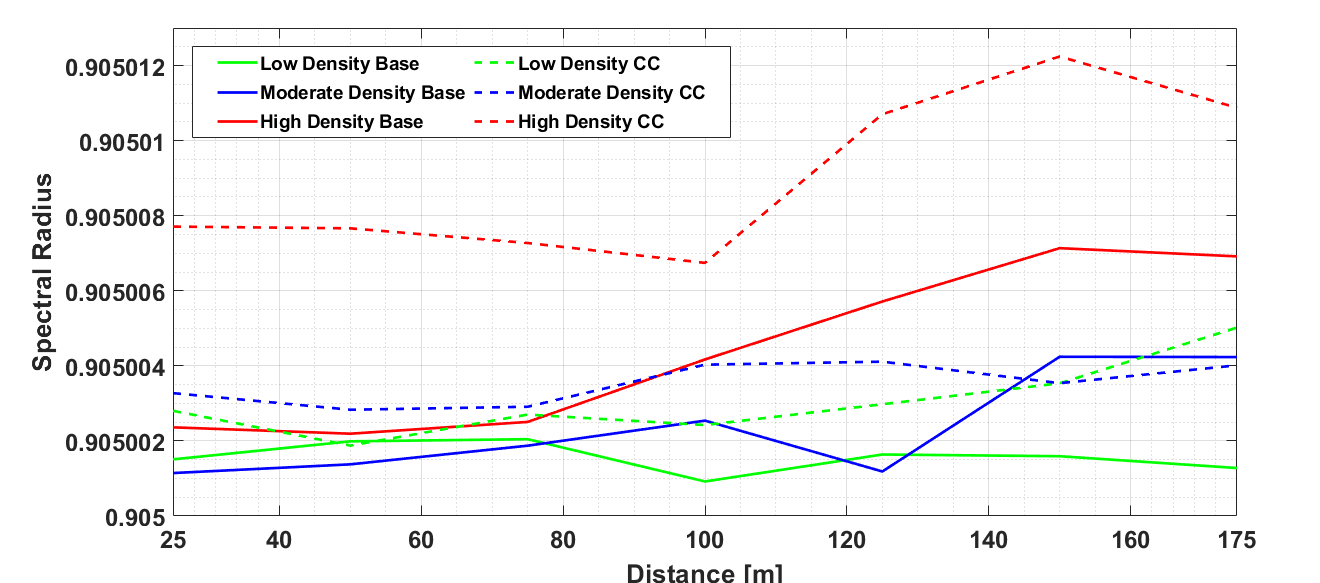}
\caption{Spectral radius for Baseline communication and Congestion Control algorithm for different inter vehicle distances}
\label{Figure:Stability}
\end{figure}

\begin{figure*}[h]
\centering
\includegraphics[width=1\textwidth]{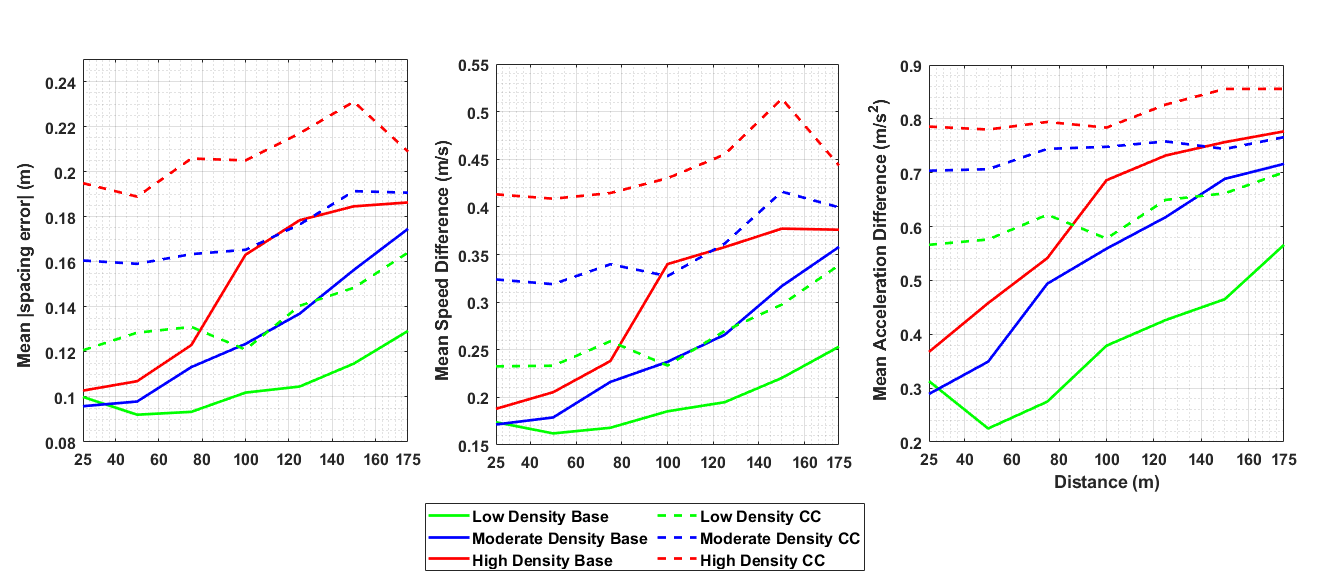}
\caption{Mean of the absolute value of Spacing error, mean of the speed difference, and mean of acceleration difference for Baseline communication and Congestion Control algorithm for different intervehicle distances and different vehicular densities (from left to right)}
\label{Figure:Performance}
\end{figure*}

\begin{proof}[Proof of Proposition \ref{prp:r2} ]
We may assume $B_k = 0$ for all $k\in \N$ (cf. \cite{Costa_Book}). 
Denote by $Q$ the TPM given by the stationary distribution $\pi$, i.e. $Q' = (\pi, \dots, \pi).$
The respective Markov chain is then the Bernoulli process  $w_k = A_k^\pi w_{k-1},\, k\in \N,\, w_0 = z_0.$

Then it is known that 
    $$r:= \varrho((Q\kronecker I)D) =\varrho(\E[A_1^\pi \kronecker A_1^\pi]),$$ where $D$ is the block diagonal matrix $\text{blkdiag}(\mathcal{A}_1 \kronecker \mathcal{A}_1, \dots, \mathcal{A}_N \kronecker \mathcal{A}_N)$
and hence, by hypothesis, $r<1.$

Since the spectral radius is a continuous function from the space of matrices to the real numbers, and since the iterates of the TPM, $P^n$, converge to $Q$ as $n$ approaches infinity, there is $n_0\in \N$ such that $\lvert\varrho((P^{n_0}\kronecker I)D) - r\rvert < 1-r$, thus 
\begin{equation}\label{eqn:markov_iters}\varrho((P^{n_0}\kronecker I)D) < 1.
\end{equation}
Let $\tilde A_k = A_k$ if $ k\, \vert\, n_0,$ and $\tilde A_k = I$ else.
Moreover, the process induced by the system matrices $A_{n_0}, A_{2 n_0}, A_{3 n_0}, \dots$, is Markovian with stationary TPM $P^{n_0}.$ Therefore, 
$$\begin{aligned}
&\limsup_{k\rightarrow \infty}(\E \norm{\tilde A_{n_0\cdot k} \cdots \tilde A_{1}}^2)^{1/(n_0\cdot k)} \\
=&\limsup_{k\rightarrow \infty} (\E\norm{A_{n_0 k}\cdot A_{n_0(k-1)} \cdots A_{n_0\cdot 2}\cdot A_{n_0\cdot 1}}^2)^{1/(n_0 k)}\\
=&(\limsup_{k\rightarrow \infty} (\E\norm{A_{n_0 k}\cdot A_{n_0(k-1)} \cdots A_{n_0\cdot 2}\cdot A_{n_0\cdot 1}}^{2})^{1/ k})^{1/n_0}\\
<&\,1, 
\end{aligned}
$$
by Proposition \ref{lem:stability_criterions} and \eqref{eqn:markov_iters}. Hence, the simple form of $\tilde A_k$ immediately gives
    $$\limsup_{k\rightarrow \infty}(\E \norm{\tilde A_{k} \cdots \tilde A_{1}}^2)^{1/ k} < 1.$$
Since $\tilde z_k$ is a Markov process, this
implies mean square stability of $\tilde z_k$ by Proposition \ref{lem:stability_criterions}, concluding the proof.
%
%
\end{proof}

 \subsection{The MJLS is not Necessarily Stable by Proposition~\ref{prp:r2}}
 \label{section:BernoulliMarkov}
\noindent Proposition \ref{prp:r2} ensures stability of a variant of the initial MJLS if the corresponding Bernoulli system is stable. However, we may not conclude that the initial MJLS is stable as well.  Let us consider a simple example where the system dynamics is given by \eqref{eqn:closed-loop MJLS} with $B_k = 0$ and $A_k = (I-\frac{1}{n}\ones{}\,\ones') (I - \varepsilon L_k),$ where $\ones$ denotes the vector of all ones. Thus $A_k$ defines a first order consensus protocol. We use the configuration for moderate vehicular density and communication with congestion control for 100m distance here. 

Figure \ref{Figure:Counterexample} shows spectral radius curves of both the Bernoulli and the Markov system over the system parameter $\varepsilon$. Since the curves intersect, there are system parameters where the Markov system behaves more conservatively than the Bernoulli system and vice versa. Thus, we cannot simply use parameters that stabilize the Bernoulli System and hope for the stability of the MJLS.

\section{Results}
\noindent In this paper we used two communication modes: firstly, a baseline communication where messages are broadcast every 100 ms (labeled as "Base" in figures) and secondly, under congestion control algorithm which allows broadcast interval in the range of [100ms 600ms] (referred to as CC).

The TPM for two different IPG Markov chains is illustrated in Figure \ref{Figure:TPM}. For the one with low vehicle density and baseline communication mode, regardless of the current IPG state, the most probable next state is 100ms. In other words, regardless of how late we receive the current packet, we will receive the next one in 100 ms. While in the TPM of the case with high vehicle density and CC communication, the next IPG state highly depends on the current IPG state. In this case, the link situation has high importance.

The speed and acceleration profile of a 5 vehicle string for ideal communication (no loss), best case communication (low vehicular density with baseline communication and with 25m inter-vehicle distance), and worst-case communication ( high vehicular density with congestion control applied and with 175m inter-vehicle distance) are shown in Figure \ref{Figure:profile}. In all plots, the leader was supposed to reduce its speed from $20\ m/s$ to $15\ m/s$ and accelerate to reach $25\ m/s$. In the worst case, the PD controller was able to maintain the desired velocity, but a sudden change in acceleration appeared which will result in passenger inconvenience.

Moreover, some properties (e.g., stability and scalability) are not only related to decentralized controllers, but also depend on the information flow topology. By embedding erasure links in our algorithms as a Markov chain, stability properties become dependent on the erasure links. Consequently, it will allow us to analyze the impact of erasure links on platoon control performance. Figure \ref{Figure:Stability} shows the spectral radius of described MJLS network for both baseline communication and CC algorithm in 25m - 175 m range. As we increase the vehicular density, spectral radius increases (red lines are above blue and green lines). Also, we see a higher spectral radius value for the CC algorithm rather than baseline communication in Figure \ref{Figure:Stability} (dashed lines are above regular lines). Note that if the communication rate is higher, then the probability of receiving a packet will be larger even if the Packet Error Rate (PER) is high. Furthermore, the spectral radius has an increasing trend with the distance between connected vehicles. For all cases, the network is stable, and the spectral radius value is below one.

We defined the error in platooning as the absolute value of the difference between the actual distance gap and desired distance gap in meters. Also, the difference between maximum speed and minimum speed and maximum acceleration and minimum acceleration considering all string members at each time step is a good measure for traffic flow and network performance. We defined these metrics as speed difference and acceleration difference.

Figure \ref{Figure:Performance} shows the mean of the absolute value of spacing error, mean of the speed difference, and mean of acceleration difference for Baseline Communication and CC algorithm in the 25m - 175 m range for intervehicle distance. By increasing the vehicle density, the performance degrades as expected (red lines are above blue and green lines in all plots, also dashed lines are above solid lines).
\section{Conclusion}
\noindent In this paper, we have addressed the problem of communication erasure links on the control performance of CAVs under the consensus framework for Platooning application. Markov chain models are used to represent switching network topologies resulting from link erasures. The impact of communication erasure links on vehicle platoon formation, robustness, and stability is analyzed and illustrated. Our findings show that platooning with C-V2X is stable using a simple PD controller and verified by various numerical simulations. The numerical simulations demonstrate that communication rate and loss ratio have a significant impact on vehicle platoon performance. For safety and efficiency, a better controller and communication model, i.e. Model Predictive Control (MPC), and Model-Based Communication (MBC) is more suitable as presented in \cite{Mosh2111:Gaussian}.
\section{Acknowledgement}
\noindent This research was partially supported by the United State's National Science Foundation under grant numbers CNS-1932037 and CNS-1931981 and partially funded by the German Research Foundation under DFG grant numbers FR-2978/2-2 and WE-2176/14-2.

\balance
\bibliography{refs.bib}{}
\bibliographystyle{unsrt}
\end{document}